\title{Pathfinding with Lazy Successor Generation}
\author{Keisuke Okumura$^{1,2}$ \\
  $^1$University of Cambridge\\
  $^2$National Institute of Advanced Industrial Science and Technology (AIST)\\
  \texttt{ko393@cl.cam.ac.uk} \\
}
\date{}
\begin{document}
\maketitle
\begin{abstract}
  We study a pathfinding problem where only locations (i.e., vertices) are given, and edges are implicitly defined by an oracle answering the connectivity of two locations.
  Despite its simple structure, this problem becomes non-trivial with a massive number of locations, due to posing a huge branching factor for search algorithms.
  Limiting the number of successors, such as with nearest neighbors, can reduce search efforts but compromises completeness.
  Instead, we propose a novel \lacas algorithm, which does not generate successors all at once but gradually generates successors as the search progresses.
  This scheme is implemented with $k$-nearest neighbors search on a k-d tree.
  \lacas is a complete and anytime algorithm that eventually converges to the optima.
  Extensive evaluations demonstrate the efficacy of \lacas, e.g., solving complex pathfinding instances quickly, where conventional methods falter.
\end{abstract}

\section{Introduction}
\begin{definition}
  \label{def:problem}
  We consider a pathfinding problem within a 2D workspace $\W \subseteq [0, 1]^2$, encompassing a set of locations $V = \{v_1, v_2, \ldots, v_n\}$ with each $v_i \in \W$.
  Let $s \in V$ be the start location and $t \in V$ be the goal, distinct from $s$.
  An oracle function $\connect: V \times V \mapsto \{\true, \false\}$ determines whether an agent can traverse between any two points.
  A \emph{solution} is a sequence of locations $\pi \defeq (u_1, u_2, \ldots, u_m)$, where $u_k \in V$, satisfying $u_1 = s$, $u_m = t$, and $\connect(u_k, u_{k+1}) = \true$ for each $k \in \{1, \ldots, m-1\}$.
\end{definition}

\Cref{fig:top} presents an example instance along with its solutions.
The cost of a solution, $\cost(\pi)$, is given by $\sum_{k = 1}^{m-1} \dist(u_k, u_{k+1})$, where \dist represents the Euclidean distance.
A solution, $\pi^\ast$, is termed \emph{optimal} if no other solution $\pi$ exists with $\cost(\pi) < \cost(\pi^\ast)$.
An algorithm is \emph{complete} if it always yields a solution when solutions exist, or otherwise indicates their absence.
An algorithm is \emph{optimal} if it always finds an optimal solution.

{
  \begin{figure}[t!]
    \centering
    \includegraphics[width=0.5\linewidth]{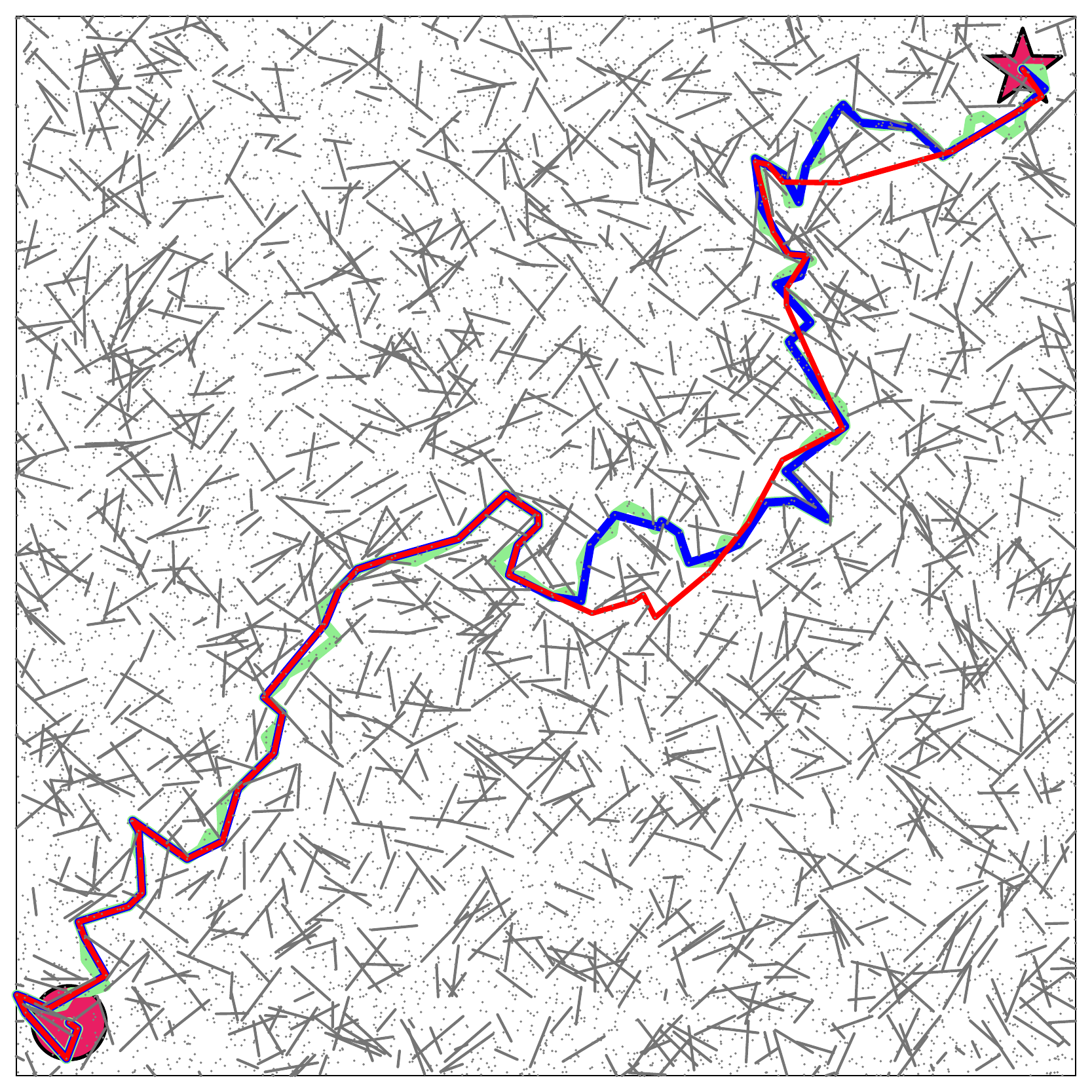}
    \caption{
      Example instance and solutions.
      The start is placed at the bottom left (circle), and the goal at the top right (star).
      Locations are denoted by small dots ($n=10,000$).
      The workspace includes obstacles represented by gray lines.
      Three solution paths are illustrated:
      \textcolor[HTML]{1AC938}{\emph{(i)} initial solution by \lacas},
      \textcolor[HTML]{023EFF}{\emph{(ii)} that of \lacat, a \lacas variant similar to \thetastar}~\protect\cite{daniel2010theta}, and
      \textcolor[HTML]{E8000B}{\emph{(iii)} refined \lacat path after \SI{10}{\minute}}.
      Both initial solutions were obtained in approximately \SI{30}{\second} with the Julia language implementation.
      For reference, \astar~\protect\cite{hart1968formal} did not find a path within \SI{1}{\hour}.
      Greedy best-first search required $\geq$\SI{9}{\minute} for a suboptimal path, and $\geq$\SI{3}{\minute} for RRT-connect~\protect\cite{kuffner2000rrt}.
    }
    \label{fig:top}
  \end{figure}
}

\Cref{def:problem} is also referred to as \emph{pathfinding on $E^4$-graphs (explicit graphs with expensive edge evaluation)}~\cite{choudhury2017densification}.
Solving \cref{def:problem} (near-)optimally and quickly is attractive, given its myriad applications in fields like robot motion planning and video games, among others.
However, \emph{this problem poses non-trivial challenges to search algorithms when $n$, the number of locations, is enormous, which is the interest of this paper}.
We first review existing related problems to underscore the unique features of \cref{def:problem}.

\subsection{Related Problems}

A \emph{graph pathfinding} problem involves determining the existence of a path in a graph $G = (V, E)$ that connects specified start and goal vertices, with $E$ denoting a set of arcs.
\Cref{def:problem} is framed as a graph pathfinding, where $V$ represents a set of locations, and $E$ is \emph{implicitly} defined by the \connect oracle.
Consequently, solving \cref{def:problem} optimally is achievable by classical search methods, such as \astar~\cite{hart1968formal}.
However, the primary challenge arises from each location's potential to connect to \emph{all} others, resulting in a huge \emph{branching factor} that hampers rapid solution derivation.
Moreover, frequent invocation of \connect can be computationally intensive, as seen in collision checking in motion planning studies~\cite{elbanhawi2014sampling}.
Limiting the number of neighbors for $v \in V$ might evade this issue, e.g. employing $k$-nearest neighbors of $v$ based on the Euclidean distance, but such methods sacrifice completeness and optimality.

\emph{Any-angle path planning on grids (AAPP)}~\cite{daniel2010theta} is a special case of \cref{def:problem} such that each location $v \in V$ is placed on a lattice grid.
AAPP has been extensively studied for its ability to generate shorter paths than standard grid pathfinding.
While there exists an efficient, optimal algorithm for AAPP~\cite{harabor2016optimal}, it is inapplicable to \cref{def:problem} due to its lattice grid assumption.
Adapting suboptimal approaches like \thetastar~\cite{daniel2010theta}, by restricting neighbor nodes, is feasible but again compromises completeness.

A class of \emph{lazy shortest path (LazySP)} algorithms~\cite{dellin2016unifying} aims to find the shortest path in graphs with the minimum edge evaluation overhead.
LazySP is motivated by the fact that edge evaluation is sometimes computationally expensive.
While it does not address the huge branching factor issue, solving \cref{def:problem} shares LazySP's motivation:
every location may connect to all others, leading to enormous edge evaluation times in naive brute-force approaches.

\emph{Motion planning (MP)}~\cite{lavalle2006planning} aims to find a collision-free path that leads an object to ideal states.
With a representation of configuration spaces, this problem is translated to ``single-point'' pathfinding in obstacle-free spaces.
\Cref{def:problem} is then understood as an MP problem, restricting agent configurations to predefined locations $V$ and transitions to those defined by the \connect oracle.
This resembles pathfinding on probabilistic roadmap (PRM)~\cite{kavraki1996probabilistic}, which approximates a configuration space by a graph representation comprising randomly sampled points, and \emph{explicitly} holds connections between locations.
However, standard PRM implementations often overlook the huge branching factor issue by assuming a connection between two points if they are sufficiently close.
An alternative MP method is tree-based planning, like rapidly exploring random tree (RRT)~\cite{lavalle1998rapidly}, which incrementally develops locations during the search process.
Yet, this class can struggle to locate suitable sampling points depending on the shapes of workspaces.

\subsection{Contribution}
The preceding discussions highlight a key dilemma:
On one hand, naive search algorithms that expand $O(n)$ successor nodes all at once from one location should be avoided due to substantial computational overhead.
On the other hand, examining connectivity across all potential locations is necessary to ensure the discovery of a solution path, if it exists.

A possible resolution to this dilemma is the gradual generation of successor nodes as the search progresses, termed \emph{lazy successor generation} in this paper.
Based on this notion, we introduce the \emph{\lacas} algorithm for solving \cref{def:problem}, an abbreviation for \emph{lazy constraints addition search in single-agent pathfinding}.
\lacas comprises a \emph{two-level search}:
the high-level search executes a search on the original problem, while the low-level search focuses on efficiently extracting a small portion of successors.
Specifically, the low-level utilizes $k$-nearest neighbors search on a \emph{k-d tree}~\cite{bentley1975multidimensional}, based on the premise that closer locations are more likely to be connected than distant ones.
Theoretically, \lacas is complete and is also an \emph{anytime} algorithm~\cite{zilberstein1996using} that initially finds unbounded suboptimal solutions and eventually converges to the optimum.
Empirical evaluations show that \lacas rapidly solves various complex problems with satisfactory quality, which are difficult to address with conventional pathfinding methods.

\paragraph{Paper Organization.}
\Cref{sec:algorithm} describes \lacas in details.
\Cref{sec:related-work} discusses related algorithms like \astar with partial expansion~\cite{yoshizumi2000partial,goldenberg2014enhanced} and \lacam for multi-agent pathfinding~\cite{okumura2023lacam,okumura2023lacam2}, which inspired this work.
\Cref{sec:tips} details techniques to implement \lacas, including \emph{\lacat} for enhanced solution quality.
\Cref{sec:evaluation} presents an empirical evaluation of \lacas, concluding in \cref{sec:discussions}.
The code will be available at the following website: \url{https://kei18.github.io/}.

\section{Algorithm}
\label{sec:algorithm}

\subsection{Lazy Constraints Addition}
\label{sec:lazy-constraints-addition}
The essence of lazy successor generation lies in incrementally generating small portions of successors, while ensuring all are eventually produced.
A naive approach involves initially sorting $O(n)$ locations by a distance relative to the target and goal, then sequentially extracting a \emph{batch}, a small portion of the entire locations, following the sort result.
However, this method demands substantial computation, as sorting each location requires an $O(n \log n)$ operation.
\lacas avoids this computationally intensive procedure by utilizing a k-d tree~\cite{bentley1975multidimensional}.

A k-d tree is a widely used data structure for storing points in k-dimensional space, efficiently answering queries to retrieve nearby points from a target point.
For instance, the time complexity of finding the nearest point among $n$ locations is $O(\log n)$ under certain reasonable assumptions, once the tree is constructed with an $O(n\log n)$ overhead.
Its applications span unsupervised classification tasks and sampling-based motion planning algorithms~\cite{lavalle2006planning}.
Here, it facilitates lazy successor generation.
Specifically, consider a k-d tree \T constructed for a set of locations $V$.
Then, \lacas employs the operation $\funcname{nearest\_neighbors\_search}(\T, v \in V, b \in \mathbb{N}_{>0}, \theta \in \mathbb{R})$ to retrieve $b$ nearest neighbors in $V$ of $v$, based on distance, but each with a distance exceeding $\theta$.
If $\theta < 0$, it equates to the standard $k$-nearest neighbors search for \T.
Implementing \funcname{nearest\_neighbors\_search} is straightforward, because it merely introduces a threshold distance $\theta$ to the normal $k$-nearest neighbors search.
We assume that the function returns an empty set if no locations meet the threshold.

Observe that \funcname{nearest\_neighbors\_search} functions as a lazy successor generator.
It circumvents the need to enumerate all locations at once, allowing for the eventual generation of all locations through multiple invocations by incrementally increasing the threshold distance $\theta$.
In this context, $\theta$ acts as a \emph{constraint} on successor generation, with the constraints being added in a \emph{lazy} manner.
The name of \lacas, \emph{lazy constraints addition search}, comes from this view.

\subsection{Pseudocode}
\Cref{algo:lacas} shows a minimum \lacas.
The anytime parts are grayed out and explained after providing the backbone.

Similar to other search algorithms, a search node in \lacas consists of a location $v \in V$ and a pointer to its parent node; see \cref{algo:lacas:init-node}.
Search nodes are stored in an \open list, which directs the search, and an \explored table to prevent duplicating nodes.
\open can be implemented using various data structures such as stacks, queues, etc.
The following description assumes a stack, adhering to a depth-first search (DFS) style.

After constructing the k-d tree (\cref{algo:lacas:build-kdtree}), \lacas progresses the search by processing one node \N at a time (\cref{algo:lacas:top}).
The procedure initially retrieves $b$ potential successor locations for \N (\cref{algo:lacas:successors}).
If no such locations exist, the original node \N is discarded (\cref{algo:lacas:discard});
otherwise, the distance threshold is updated based on these locations (\cref{algo:lacas:update-threshold}).
Subsequently, successor nodes are generated after confirming connectivity (Lines~\ref{algo:lacas:for-new-node}--\ref{algo:lacas:insert-new-node}).
When the search reaches a goal location $t$ (\cref{algo:lacas:hit-target}), a solution can always be constructed by backtracking from the node at the goal ($\N\goal$; \cref{algo:lacas:opt,algo:lacas:subopt}).
If nodes are exhausted without reaching $t$, it signifies the absence of a solution (\cref{algo:lacas:no-solution}).

{
  \newcommand{\dopen}{\m{\mathcal{Q}}}
  \begin{algorithm}[t!]
    \caption{\lacas}
    \label{algo:lacas}
    \begin{algorithmic}[1]
      \Input{$V$, $s$, $t$, \connect}
      \Output{path, \nosolution, or \failure~{\small (e.g., timeout)}}
      \Params{batch size $b \in \mathbb{N}_{>0}$}
      \Notation{\gl{$\f(\N) \defeq \N.g + \dist(\N.v, t)$;} \gl{$\spadesuit \defeq (\N\goal$$\neq$$\bot)$}}
      \State initialize \open, \explored;~\gl{$\N\goal \leftarrow \bot$}
      \State
      $\N\init \leftarrow \begin{aligned}[t]
        \bigl\langle
        v: s,
        \parent: \bot,
        \theta: 0,
        \gl{g: 0,}
        \gl{\neighbors: \emptyset}
        \bigr\rangle\end{aligned}$
      \label{algo:lacas:init-node}
      \State $\open.\push(\N\init)$;~~$\explored[s] = \N\init$
      \label{algo:lacas:init-node-insert}
      \State $\T \leftarrow \funcname{build\_kdtree}(V)$
      \label{algo:lacas:build-kdtree}
      \While{$\open \neq \emptyset~\land~\lnot\funcname{interrupt}()$}
      \State $\N \leftarrow \open.\funcname{top}()$
      \label{algo:lacas:top}
      \IfSingle{$\N.v = t$}{$\N\goal \leftarrow \N$}
      \label{algo:lacas:hit-target}
      \IfSingleGl{$\spadesuit \land \f(\N) \geq \f(\N\goal)$}{$\open.\pop()$; \Continue}
      \label{algo:lacas:pruning}
      %
      \State $\B \leftarrow \funcname{nearest\_neighbors\_search}(\T, \N.v, b, \N.\theta)$
      \label{algo:lacas:successors}
      \IfSingle{$\B = \emptyset$}{$\open.\pop()$;~\Continue}
      \label{algo:lacas:discard}
      \State $\N.\theta \leftarrow \max_{v \in \B}\dist(\N.v, v)$
      \label{algo:lacas:update-threshold}
      %
      \For{$v \in \B$}
      \label{algo:lacas:for-new-node}
      \IfSingle{$\lnot\connect(\N.v, v)$}{\Continue}
      \If{$\explored[v] = \bot$}
      \StateGl{$g \leftarrow \N.g + \dist(\N.v, v)$}
      \State $\N\new$$\leftarrow$$\langle
        v$$: v,
        \parent$$: \N,
        \theta$$: 0,
        \gl{g: g,}
        \gl{\neighbors: \emptyset}
        \rangle$
        \label{algo:lacas:create-new-node}
      \State $\open.\push(\N\new)$;\;$\explored[v] = \N\new$
      \label{algo:lacas:insert-new-node}
      \StateGl{$\N.\neighbors.\append(\N\new)$}
      \Else
      \StateGl{$\N.\neighbors.\append(\explored[v])$}
      \label{algo:lacas:update-existing-node-neighbor}
      \StateGl{$\dopen \leftarrow \llbracket \N \rrbracket$}
      \Comment{Dijkstra update, for optimality}
      \label{algo:lacas:dijkstra}
      \WhileGl{$\dopen \neq \emptyset$}
      \StateGl{$\N\from \leftarrow \dopen.\pop()$}
      \ForGl{$\N\to \in \N\from.\neighbors$}
      \StateGl{$g \leftarrow \N\from.g + \dist(\N\from.v, \N\to.v)$}
      \IfGl{$g < \N\to.g$}
      \label{algo:lacas:g-value-pruning}
      \StateGl{$\N\to.g \leftarrow g$;\;$\N\to.\parent \leftarrow \N\from$}
      \StateGl{$\dopen.\push(\N\to)$}
      \IfGl{$\spadesuit \land \f(\N\to) < \f(\N\goal)$}
      \label{algo:lacas:revive-start}
      \StateGl{$\open.\push\left(\N\to\right)$}
      \EndIfGl
      \label{algo:lacas:revive-end}
      \EndIfGl
      \EndForGl
      \EndWhileGl
      \label{algo:lacas:dijkstra-end}
      \EndIf
      \EndFor
      \EndWhile
      \If{$\spadesuit \land \open = \emptyset$}
      \Return $\backtrack(\N\goal)$
      \Comment{opt.}
      \label{algo:lacas:opt}
      \ElsIf{$\spadesuit$}
      \Return $\backtrack(\N\goal)$
      \Comment{suboptimal}
      \label{algo:lacas:subopt}
      \ElsIf{$\open = \emptyset$}
      \Return \nosolution
      \label{algo:lacas:no-solution}
      \Else
      ~\Return \failure
      \Comment{e.g., timeout}
      \EndIf

    \end{algorithmic}
  \end{algorithm}
}

\paragraph{Anytime Components.}
To eventually find optimal solutions, each search node includes a $g$-value denoting the cost-to-come, and \neighbors, holding discovered successors.
Upon encountering previously known locations, \lacas rewires parent pointers using an adapted Dijkstra algorithm~\cite{dijkstra1959note} (Lines~\ref{algo:lacas:dijkstra}--\ref{algo:lacas:dijkstra-end}).
This rewiring is typically applied to a small section of the search tree due to pruning by $g$-values (\cref{algo:lacas:g-value-pruning}).
Another anytime component is that upon reaching the goal, \lacas excludes nodes not improving solution quality to accelerate solution refinement (\cref{algo:lacas:pruning}).
This exclusion relies on the $f$-value, i.e., the aggregate of the $g$-value and the estimated cost-to-go (i.e., $h$-value; distance to the goal; aka. heuristic).
It also restores nodes when needed, following updates to their $g$-values (\cref{algo:lacas:revive-start,algo:lacas:revive-end}).

\paragraph{Remarks.}
\Cref{algo:lacas} assumes that \emph{each location has unique distances to other locations};
otherwise, lazy successor generation may overlook the generation of specific successors when two locations are equidistant, as in grid worlds.
This issue can be easily addressed with deterministic tiebreaking for distances.
Indeed, our implementation uses such a tiebreaking method.
Thus, subsequent analysis implicitly employs this assumption of unique distances.

\subsection{Analysis}
\begin{theorem}
  \lacas (\cref{algo:lacas}) is complete and optimal.
  \label{thrm:lacas}
\end{theorem}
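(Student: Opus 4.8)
The plan is to analyze \lacas as a search over the induced graph $G=(V,E)$ with $E=\{(u,w):\connect(u,w)=\true\}$, writing $d^\ast(x)$ for the shortest-path cost from $s$ to $x$ in $G$ and $d^\ast(t)$ for the optimal solution cost. The one structural fact I would isolate first is a \emph{full-expansion lemma}: in the plain backbone (the non-grayed lines), a node $\N$ is removed from \open only by the empty-batch rule (\cref{algo:lacas:discard}), and since the Remark's unique-distance assumption forces each revisit to push $\N.\theta$ strictly past the previous batch (\cref{algo:lacas:update-threshold}), repeated \funcname{nearest\_neighbors\_search} calls enumerate all of $V$ before $\B=\emptyset$ occurs; hence every $v$ with $\connect(\N.v,v)=\true$ is inserted into \explored before $\N$ leaves \open. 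Termination of this phase is immediate because there are at most $n$ distinct nodes, each invoked at most $\lceil n/b\rceil$ times.

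For completeness I would observe that both the $\f$-pruning (\cref{algo:lacas:pruning}) and the revive step are guarded by $\spadesuit$ (i.e.\ $\N\goal\neq\bot$), so until a goal node is popped the algorithm is exactly the backbone. By the full-expansion lemma the set $R$ of explored locations contains $s$ and is closed under $\connect$, so $R$ is precisely the set reachable from $s$. If $t$ is reachable it lies in $R$; the node at $t$ is created, retained in \open until popped, and on reaching the top sets $\N\goal$ (\cref{algo:lacas:hit-target}), so a path is returned. If $t$ is unreachable, $\spadesuit$ stays false throughout, \open empties, and \nosolution is returned (\cref{algo:lacas:no-solution}). This settles completeness.

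For optimality at natural termination ($\spadesuit\wedge\open=\emptyset$, \cref{algo:lacas:opt}) I would maintain two invariants. First, $\N.g$ always equals the cost of the parent chain back to $s$, so $\N.g\ge d^\ast(\N.v)$ and $\backtrack(\N\goal)$ is a genuine path of cost $\N\goal.g$; it therefore suffices to prove $\N\goal.g=d^\ast(t)$. Second, because the inner Dijkstra loop (Lines~\ref{algo:lacas:dijkstra}--\ref{algo:lacas:dijkstra-end}) runs to completion after every newly found edge, at the end of each iteration the relaxation inequality $w.g\le u.g+\dist(u,w)$ holds for every generated edge $w\in u.\neighbors$. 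I would also use admissibility of the Euclidean heuristic, $\dist(x,t)\le d^\ast(x,t)$, so that any vertex $w_i$ on an optimal path satisfies $\f(w_i)=d^\ast(w_i)+\dist(w_i,t)\le d^\ast(t)$.

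The core is then an induction along a fixed optimal path $P^\ast=(w_0,\dots,w_\ell)$, under the contradiction hypothesis $\N\goal.g>d^\ast(t)$, showing each $w_i$ is fully expanded with $w_i.g=d^\ast(w_i)$. The base case $w_0=s$ is clear. For the step, full expansion of $w_i$ generates the edge $(w_i,w_{i+1})$ and places $w_{i+1}\in w_i.\neighbors$, so the termination-time relaxation inequality together with $w_i.g=d^\ast(w_i)$ forces $w_{i+1}.g\le d^\ast(w_i)+\dist(w_i,w_{i+1})=d^\ast(w_{i+1})$, hence equality by the lower bound; admissibility then gives $\f(w_{i+1})\le d^\ast(t)<\N\goal.g$, so $w_{i+1}$ can never be removed by $\f$-pruning. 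Taking $i=\ell$ yields $\N\goal.g=w_\ell.g=d^\ast(t)$, the desired contradiction. The main obstacle is the remaining link in the step, namely that $w_{i+1}$ is actually \emph{fully expanded} rather than merely carrying the right $g$-value: I must show that once its $g$-value is corrected by a Dijkstra update, the revive rule (Lines~\ref{algo:lacas:revive-start}--\ref{algo:lacas:revive-end}) reinserts it into \open with $\f(w_{i+1})<\f(\N\goal)$, so that it cannot have left \open through pruning and must have left through the empty-batch rule. Pinning this down requires tracking the interleaving of the monotonically decreasing $\N\goal.g$ with the Dijkstra-propagated $g$-corrections, and verifying that the guard $\spadesuit\wedge\f(\N\to)<\f(\N\goal)$ on \cref{algo:lacas:revive-start} indeed fires for each optimal-path node after its final correction.
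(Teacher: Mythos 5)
Your completeness argument is essentially the paper's own: the threshold $\theta$ increases monotonically with each invocation, so every location is enumerated before a node is discarded at \cref{algo:lacas:discard}, and finiteness of $V$ gives termination of the exhaustive phase. For optimality, however, you take a genuinely different route. The paper simply \emph{excludes} the pruning and revive components from the analysis (asserting they ``merely accelerate the search without compromising the complete and optimal search structure'') and then proves a single clean invariant: the backtracking tree is at all times a shortest-path tree of the incrementally discovered graph $G$, with the known-location case handled by the Dijkstra rewiring; since $G$ ultimately contains every edge reachable from $s$, optimality follows immediately, with no appeal to heuristic admissibility. You instead keep the pruning in play and run an A*-style induction along a fixed optimal path, combining the Bellman relaxation condition maintained by the inner Dijkstra loop with admissibility of the Euclidean heuristic to show no optimal-path node can be permanently pruned. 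Your approach is more work, but it actually discharges the step the paper waves through: that \cref{algo:lacas:pruning} together with \cref{algo:lacas:revive-start,algo:lacas:revive-end} preserves optimality. The ``obstacle'' you flag does close: under the contradiction hypothesis, once $w_{i+1}.g$ reaches $d^\ast(w_{i+1})$ its $f$-value satisfies $\f(w_{i+1}) \le d^\ast(t) < \N\goal.g = \f(\N\goal)$ for the remainder of the run (since $\N\goal.g$ is non-increasing but, by hypothesis, never drops to $d^\ast(t)$), so the revive guard fires at the final correction if the node had been pruned, it can never be pruned again, and natural termination with $\open=\emptyset$ forces it to exit via the empty-batch rule, i.e., full expansion; if the final correction occurs before the goal is first reached, the node is still in \open anyway because no pruning has yet occurred. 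In short: correct, and arguably more complete than the paper's proof on the pruning question, at the cost of a heavier induction than the paper's shortest-path-tree invariant.
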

\begin{proof}
  \emph{Completeness:}
  Each time a node \N is invoked, its distance threshold $\theta$ monotonically increases by updating $\theta$ to the maximum distance from $\N.v$ to potential successor locations in $\B$ (\cref{algo:lacas:update-threshold}), each having at least the distance of $\theta$.
  Since $\theta$ initially starts at zero, any location $u \in V$, with $u \neq \N.v$, must have been encompassed in one of \N's invocations by the time \N is discarded at \cref{algo:lacas:discard}.
  Furthermore, each search node must be discarded after a finite number of invocations as $n$ is finite.
  Hence, \lacas performs an exhaustive search within a finite space, ensuring completeness.

  \emph{Optimality:}
  Consider \cref{algo:lacas} excluding the pruning components (\cref{algo:lacas:pruning,algo:lacas:revive-start,algo:lacas:revive-end}), because they merely accelerate the search without compromising the complete and optimal search structure.
  For simplicity, let's assume $b=1$.
  The general case is straightforward.

  The proof employs a directed graph $G$, each vertex of which represents a location.
  Initially, $G$ comprises solely the start location $s$.
  Subsequently, the search iterations gradually develop $G$.
  Upon the generation of a new node (\cref{algo:lacas:create-new-node}), its corresponding location is incorporated into $G$.
  An arc $(u, v)$ in $G$ happens when the search identifies a connection from $u$ to $v$, i.e., $\N^v \in \N^u.\neighbors$.

  We now prove that: \emph{($\clubsuit$) for any node $v$ in $G$, a path from $s$ to $v$ traced by backtracking (i.e., following $\N.\parent$) represents the shortest path in $G$, considering cumulative edge costs.}
  This is proven by induction.
  Initially, $G$ consists solely of $s$, thus fulfilling $\clubsuit$.
  Assume now that $\clubsuit$ is satisfied in the previous search iteration.
  In the next iteration for the search node \N, $G$ is updated by either:
  \emph{(i)}~finding a new location, or
  \emph{(ii)}~finding a known location.
  Case \emph{(i)} holds $\clubsuit$ because only a new location and an arc toward the location are added to $G$.
  Case \emph{(ii)} holds $\clubsuit$ because the addition involves solely a new arc from \N to the known location, followed by the adaptive implementation of Dijkstra's algorithm from \N, preserving the shortest path tree structure.

  Upon completion of the search, $G$ entails all feasible paths from $s$ to $t$, as \lacas performs an exhaustive search.
  In conjunction with $\clubsuit$, \lacas yields an optimal solution for solvable instances, otherwise, reports the non-existence.
\end{proof}

The preceding analysis regards \lacas as an optimal algorithm; however, in practice, a suboptimal solution can be obtained at any point once after the search hits the goal.
As time allows, \lacas continues to improve the solution quality.
This feature is particularly useful for real-time systems where planning time is severely limited.

\section{Related Work}
\label{sec:related-work}
This section organizes relationships with existing algorithms before going to the empirical side.

\subsection{Beam Search}
We begin the literature review from \emph{beam search}~\cite{bisiani1987beam}, which is a heuristic search method that limits the number of nodes in the \open list.
It can be categorized into two types~\cite{wilt2010comparison}:
\emph{(i)} the best-first type, which merely limits the size of the \open compared to the standard best-first search, and
\emph{(ii)} the breadth-first type, which works like breadth-first search but with a predetermined width, known as ``beam width.''
Although beam search sacrifices completeness and optimality, the breadth-first variant has been widely used in areas such as speech recognition~\cite{ravishankar1996efficient} and natural language processing~\cite{cohen2019empirical} because of its simplicity, efficiency in reducing search effort, and memory consumption.
Researchers have been extended basic beam search to have theoretical properties such as completeness or optimality~\cite{zhang1998complete,zhou2005beam,furcy2005limited,vadlamudi2013incremental}.

Similar to beam search, \lacas does not retain all successor nodes;
however, there are two significant differences.
First, unlike beam search, which generates all successors before pruning, \lacas initiates only a subset of successors through a two-level search approach.
Second, while beam search parameters (e.g., beam width) determine the number of nodes maintained throughout the search, \lacas's batch size specifies the number of successors for each search node.
These features allow \lacas to facilitate pathfinding with numerous successors while maintaining theoretical properties, i.e., to adapt the search effort to the area of interest without spending too much time generating successors.

\subsection{Partial Successor Expansion}
Rather than beam search, \lacas has more close relationships to the best-first search with \emph{partial successor expansion} as follows.

Partial Expansion \astar (\pea)~\cite{yoshizumi2000partial} is an \astar variant.
To keep memory usage small, when expanding a search node \N, \pea generates \emph{all} successors and inserts only a part of them, those having the same $f$-value as $\N$, into \open.
In our context, this approach closely aligns with the strategy outlined at the beginning of \cref{sec:lazy-constraints-addition}, which initially orders all locations relative to \N, and subsequently selects segments of this ordered list as the search progresses.
Nonetheless, sorting can become a significant bottleneck when $n$ is huge.
\lacas circumvents this bottleneck by efficiently leveraging search with the k-d tree.

Enhanced \pea (\epea)~\cite{goldenberg2014enhanced} augments \pea by considering the time aspect.
Specifically, \epea produces \emph{only} successors that have the same $f$-value as their parent node.
Such successors are identified using domain-specific knowledge, like in a sliding puzzle, making \epea not universally applicable to planning problems such as \cref{def:problem}.
\lacas principally inherits the concept of lazy successor generation from \epea.
However, a significant departure from \epea is that \lacas no longer cares about $f$-values when generating successors.
Instead, it employs $f$-values for pruning after the discovery of initial solutions.
This approach simplifies the implementation of lazy successor generation in \lacas compared to \epea.

\lacam~\cite{okumura2023lacam,okumura2023lacam2} is a scalable algorithm for multi-agent pathfinding (MAPF)~\cite{stern2019def}, which aims to find collision-free paths for multiple agents on graphs.
By considering a joint search space for all agents, MAPF is understood as a graph pathfinding problem with a huge branching factor.
Following the remarkable success of \lacam, albeit tailored for MAPF, we posited that the underlying principles of \lacam could be applied to \cref{def:problem}, since those two problems have the common issue of the huge branching factor.
The resultant algorithm is \lacas.
The two algorithms vary in their implementations on lazy constraints addition and successor generation, designed for their respective problem domains.
Meanwhile, they are both complete algorithms for their domains, and eventually converge to optimal solutions starting from unbounded suboptimal ones.

\subsection{Search Space Densificaiton}
Another direction for tackling search problems with a huge branching factor is to tailor the search spaces themselves before applying search algorithms;
that is, to first present a sparse search space and gradually \emph{densify} it.
This approach is orthogonal to \lacas, since they are not search algorithms;
but we include this discussion for the sake of completeness.

A concrete example is presented in~\cite{choudhury2017densification}, which explores pathfinding in $E^4$-graphs, a generalization of \cref{def:problem}.
The essence of this study is to construct a set of subgraphs from the original graph and solve pathfinding on these subgraphs sequentially.
A subgraph is densified from the previous one by decreasing the minimum distance required to connect two locations or by increasing the number of vertices included.
Similar to \lacas, this strategy employs an anytime planning scheme that aims to first find feasible solutions in a sparse representation of the workspace and then gradually refine the solution using denser representations.
Meanwhile, this workspace densification strategy is agnostic to the search algorithm;
it densifies the search space uniformly.
In contrast, \lacas is a search algorithm that automatically densifies the specific space of interest.

In line with this approach, another notable example is introduced in~\cite{saund2020fast}, where the workspace is represented by layered graphs, providing a finer representation of the space in deeper layers.
Each layer is connected to the next, and together these graphs define the huge search space.
Although agnostic to search algorithms, the layered representation facilitates the natural densification of regions of interest.
However, the construction of layers assumes specific location patterns, such as grids or Halton sequences, which do not apply to \cref{def:problem}.
Moreover, this approach does not directly address the problem of the huge branching factor, since it does not involve edges between any two arbitrary locations.

\subsection{Search Tree Rewiring}
\lacas incorporates tree rewiring in order to eventually obtain optimal solutions.
This scheme modifies overestimated $g$-values for each search node, removing inconsistencies caused by `shortcut' paths identified as the search progresses.
This notion essentially derives from lifelong planning~\cite{koenig2004lifelong,koenig2005fast}, where an agent must continuously find solution paths in dynamically changing environments.

The rewiring in \lacas resembles asymptotically optimal sampling-based motion planning (SBMP) algorithms like \rrtstar~\cite{karaman2011sampling} and its variants in discretized search spaces~\cite{shome2020drrt}.
While these algorithms limit rewiring to local neighbors, \lacas extends it to descendant nodes.
This difference arises because SBMP presumes an infinite number of samples (i.e., locations), whereas \lacas operates under a finite number of search iterations.

Notably, \lacas shares another feature with SBMP: node pruning based on the known solution cost (\cref{algo:lacas:pruning}).
For instance, informed SBMP methods such as Informed \rrtstar~\cite{gammell2014informed} bias new location sampling from a region identified by the known solution cost.
Such pruning is often seen in anytime search algorithms as well~\cite{hansen2007anytime,van2011anytime}.

{
  \newcommand{\entry}[2]{
    \begin{minipage}{0.21\linewidth}
      \centering
      \includegraphics[trim=11 11 11 11, clip,width=1.0\linewidth]{fig/raw/#1}
      {#2}
      \medskip
    \end{minipage}
  }
  \newcommand{\w}[1]{\textbf{#1}}
  \begin{figure}[t!]
    \centering
    {
      \small
    \setlength{\tabcolsep}{0pt}
    \begin{tabular}{ccccc}
      \multicolumn{2}{c}{\S\ref{subsec:order}}
      &&
      \multicolumn{2}{c}{\S\ref{subsec:reinsert}}
      \\
      \entry{ins-order-1}{random} &
      \entry{ins-order-2}{sorted} &\;\;\;\;\;\;\;\;&
      \entry{ins-reinsert-1}{sorted} &
      \entry{ins-reinsert-2}{reinsert}
      \\
      \multicolumn{2}{c}{\S\ref{subsec:rolling}}
      &&
      \multicolumn{2}{c}{\S\ref{subsec:theta}}
      \\
      \entry{ins-rolling-1}{reinsert} &
      \entry{ins-rolling-2}{rolling} &\;\;\;\;\;\;\;\;&
      \entry{ins-theta-1}{rolling} &
      \entry{ins-theta-2}{\lacat}
      \vspace{0.5cm}
    \end{tabular}
    }
    {
    \begin{tabular}{rrrrrrrrr}
    \toprule
    &
    & solved{(\%)}$\uparrow$
    & time{(\SI{}{\second})}$\downarrow$
    & cost$\downarrow$
    & \#\connect$\downarrow$
    & \#iteration$\downarrow$
    \\\midrule
    \multirow{2}{*}{\S\ref{subsec:order}}
    & random & 73 & 8.00 & 10.94 & 52633 & 5316
    \\
    & sorted & \w{79} & \w{1.11} & \w{4.25} & \w{6695} & \w{677}
    \\
    \midrule
    \multirow{2}{*}{\S\ref{subsec:reinsert}}
    & sorted & 79 & 1.08 & 4.17 & 6470 & 654
    \\
    & reinsert & \w{83} & \w{0.35} & \w{1.90} & \w{1165} & \w{118}
    \\
    \midrule
    \multirow{2}{*}{\S\ref{subsec:rolling}}
    & reinsert & 66 & 11.66 & 1.97 & 11738 & 1176
    \\
    & rolling & \w{84} & \w{7.68} & 1.97 & \w{4603} & \w{461}
    \\
    \midrule
    \multirow{2}{*}{\S\ref{subsec:theta}}
    & rolling & 82 & \w{0.34} & 1.88 & \w{948} & 96
    \\
    & \lacat & 82 & 0.56 & \w{1.68} & 1481 & 96
    \\
    \bottomrule
    \end{tabular}
    }
    \medskip
    \caption{
      Effect of techniques in \cref{sec:tips}.
      \emph{upper:}
      \textcolor[HTML]{023EFF}{Search progress} is illustrated with dots and fine lines, representing locations within \explored when finding initial solutions or upon reaching the \SI{30}{\second} time limit.
      \textcolor[HTML]{E91E63}{Solutions} are marked by bold lines.
      These experiments used the \scen{scatter-1k} scenario, except for \S\ref{subsec:rolling} which used \scen{scatter-10k}, both detailed in \cref{sec:evaluation}.
    \emph{lower:}
    Quantitative results are summarized.
    Each scenario comprised 100 instances, potentially including unsolvable ones.
    ``solved'' indicates the number of instances solved within the time limit.
    Other metrics average the results of instances both methods successfully solved.
    ``time'' denotes the runtime to find initial solutions.
    ``cost'' refers to the initial solution costs.
    ``\#\connect'' is the number of times the method used \connect to derive initial solutions, and ``\#iteration'' is that of the number of search iterations.
    }
    \label{fig:devising}
  \end{figure}
}

\section{Devising Implementation}
\label{sec:tips}
\Cref{algo:lacas} is a minimal \lacas that leaves room for several inventions for implementation, which are covered in this section.
The techniques below rely on the fact that \lacas makes no assumptions about the node extraction order from \open for both guarantees of completeness and optimality.

\subsection{Order within Batch}
\label{subsec:order}

In each iteration, \lacas forms a batch \B at \cref{algo:lacas:successors} then processes locations in \B sequentially.
The order of locations in \B is flexible; however, it significantly impacts performance.
Among various design choices, we particularly choose to sort locations in \B in descending order of $\dist(v, t)$, where $v \in \B$, to derive initial solutions rapidly.
Using a stack structure in \open, the subsequent search iteration prioritizes the location in \B nearest to the goal, akin to greedy best-first search.

Using the experimental setup detailed in \cref{sec:evaluation}, we empirically evaluated two enumeration methods for \B: \emph{(i)}~random order and \emph{(ii)}~order sorted by distance to the goal.
\Cref{fig:devising} clearly illustrates the impact of these orders, both qualitatively and quantitatively, on initial solution discovery.

\subsection{Node Reinsert}
\label{subsec:reinsert}
When the search encounters an already known location, reinserting its corresponding node at the top of \open can enhance search efficiency.
This method, adopted from LaCAM~\cite{okumura2023lacam}, is based on the premise that frequently encountered locations during the search are potential bottlenecks.
Thus, prioritizing the search at these locations by reinvoking the corresponding node is logical.
We assume that this reinsertion occurs immediately after rewiring (Lines~\ref{algo:lacas:dijkstra}--\ref{algo:lacas:revive-end}).
\Cref{fig:devising} presents empirical results, showing that the node-reinsert operation decreases search effort and enables \lacas to discover superior initial solutions.

\subsection{Node Rolling}
\label{subsec:rolling}
Using a stack structure as \open may lead to situations where the same location is visited repeatedly in a short period because \lacas does not immediately discard invoked nodes.
This hinders rapid pathfinding;
considering setting aside such stuck nodes temporarily might be beneficial.
An implementation of this concept is adopting a double-ended queue (deque) for \open instead of the naive stack, allowing for \emph{rolling} search nodes within \open upon invocation.
Specifically, this operation pops the search node $\N$ from \open and reinserts it at the bottom if $\N$ generates a non-empty batch at \cref{algo:lacas:successors}, assuming to happen just after \cref{algo:lacas:discard}.
\Cref{fig:devising} proves that node rolling is effective for tackling complex problems.

\subsection{\lacat -- Check Grandparent}
\label{subsec:theta}
Consider a new node $\N\new$ created at \cref{algo:lacas:create-new-node} from a node $\N$.
If $\N\new$ can connect from $\N.\parent$, setting $\N.\parent$ as its parent yields a lower $g$-value for $\N\new$ compared to using $\N$ as the parent, because of the cost function adhering to the triangle inequality.
This grandparent check accelerates the attainment of better solutions compared to the vanilla \lacas, while maintaining theoretical integrity of \lacas as long as adding $\N\new$ to both $\N.\neighbors$ and $\N.\parent.\neighbors$.
This devising is also applicable when encountering existing nodes at \cref{algo:lacas:update-existing-node-neighbor}.
The grandparent check is inspired by \thetastar for AAPP~\cite{daniel2010theta}.
Taking its initial, we call the resulting algorithm \emph{\lacat}.

\Cref{fig:devising} demonstrates that \lacat can obtain superior initial solutions in the same number of iterations, albeit with the overhead of more \connect calls.
It is important to note that the preference for \lacat varies depending on the case;
rather than allocating time to the overhead of \lacat, it may be more beneficial to dedicate that time to the refinement scheme of \lacas.

\section{Evaluation}
\label{sec:evaluation}
This section assesses \textbf{\lacas} and its variant \textbf{\lacat}, both with techniques from \cref{sec:tips}, across various scenarios.
For clarity, \textbf{LaCAS} refers to \lacas without refinement after finding initial solutions, and \textbf{LaCAT} to that of \lacat.
Unless mentioned, LaCAS-based methods used a batch size $b$ of $10$.
Experiments were conducted on a 28-core desktop PC with an Apple M1 Ultra \SI{2.4}{\giga\hertz} CPU and \SI{64}{\giga\byte} RAM, with a \SI{30}{\second} timeout.
All methods were coded in Julia.

\subsection{Baselines}
We carefully selected various baseline methods, each belonging to one of two categories.
The first category includes methods that directly solve \cref{def:problem}, into which \lacas falls.
\begin{itemize}
\item \textbf{\astar}, a \emph{complete} and \emph{optimal} search scheme, using $\dist(\cdot, t)$ as a heuristic.
  We also evaluated \textbf{\astark}, limiting successors to the closest $k$ neighbors, and \textbf{\astarr}, limiting successors within distance $r$.
  Both are \emph{incomplete} and \emph{suboptimal}.
  Successors were efficiently identified using the k-d tree, similar to \lacas.
  For the experiments, we set $k=10$ and $r=0.1$, pre-adjusted to yield consistently superior performance.
\item Greedy best-first search (\textbf{GBFS}), which selects nodes solely based on the heuristic.
  This is \emph{complete} but \emph{suboptimal}.
  \textbf{GBFS-k} and \textbf{GBFS-r} were also tested, analogous to \astark and \astarr, respectively.
\item Depth-first search (\textbf{DFS}), a \emph{complete} and \emph{suboptimal} algorithm, conducted in a recursive fashion.
  The implementation prioritized visiting successors based on their distance to the goal to enhance performance.
\item \textbf{dRRT}~\cite{solovey2016finding}, an RRT variant tailored for discretized search spaces.
  It is \emph{probabilistically complete}, meaning the likelihood of finding a solution increases over time.
\item \textbf{PE}, a LaCAS variant, which first examines all successors and their heuristics, followed by the LaCAS scheme.
  For each node invoked, PE inserts a batch of successors into \open based on the distance threshold of that node.
  PE can be seen as LaCAS without the k-d tree search, akin to \pea~\cite{yoshizumi2000partial}.
\end{itemize}

{
  \setlength{\tabcolsep}{0.1pt}
  \newcommand{\entry}[3]{
    \begin{minipage}{0.12\linewidth}
      \centering
          {\small \textbf{\textcolor[HTML]{#3}{#2}}}\\
      \includegraphics[trim=11 11 11 11, clip,width=1.0\linewidth]{fig/raw/tree/#1}
    \end{minipage}
  }
  \begin{figure*}[t!]
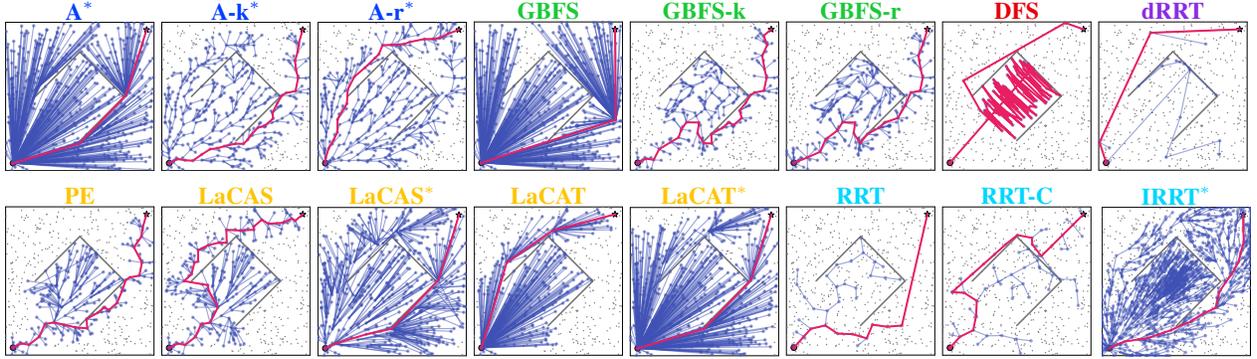

    \centering
    \begin{tabular}{cccccccc}
      \entry{astar}{\astar}{023EFF} &
      \entry{astar-k}{\astark}{023EFF} &
      \entry{astar-r}{\astarr}{023EFF} &
      \entry{gbfs}{GBFS}{1AC938} &
      \entry{gbfs-k}{GBFS-k}{1AC938} &
      \entry{gbfs-r}{GBFS-r}{1AC938} &
      \entry{dfs}{DFS}{E8000B} &
      \entry{drrt}{dRRT}{8B2BE2}
      \medskip\\
      \entry{pe}{PE}{FFC400} &
      \entry{lacas}{LaCAS}{FFC400} &
      \entry{lacas-star}{\lacas}{FFC400} &
      \entry{lacat}{LaCAT}{FFC400} &
      \entry{lacat-star}{\lacat}{FFC400} &
      \entry{rrt}{RRT}{00D7FF} &
      \entry{rrt-connect}{RRT-C}{00D7FF} &
      \entry{irrtstar}{I\rrtstar}{00D7FF}

    \end{tabular}
    \caption{
    Example run of each method.
    The \scen{trap} scenario is used, with the \SI{1}{\second} timeout and $|V|=100$.
    }
    \label{fig:search-tree}
  \end{figure*}
}

The second type comprises single-query sampling-based motion planning (SBMP) methods, which \emph{neglect} locations defined in \cref{def:problem}.
These methods generate locations on-the-fly by random sampling.
SBMP is difficult to compare fairly with the first class, yet we evaluated this class to offer general insights into pathfinding algorithms.
\begin{itemize}
\item \textbf{RRT}~\cite{lavalle1998rapidly}, a representative SBMP, which is \emph{probabilistically complete}.
  At the implementation level, we performed a goal-connection check at each location addition to accelerate solution finding.
  Additionally, we set the maximum distance for connecting two locations, except for the goal, to $0.1$.
\item RRT-Connect (\textbf{RRT-C})~\cite{kuffner2000rrt}, a well-known extension of RRT, conducting a bi-directional search from both start and goal locations.
\item Informed \rrtstar (\textbf{I\rrtstar})~\cite{gammell2014informed}, another RRT variant, which is \emph{asymptotically optimal} that approaches an optimal solution over time.
  It employs biased sampling for accelerated convergence.
\end{itemize}

In summary, 16 methods including \lacas were evaluated.
\Cref{fig:search-tree} visualizes the search progress of each method.

\subsection{Benchmark Generation}
Eight scenarios with specific start-goal locations were prepared, as shown in \cref{fig:result-main-1,fig:result-main-2}.
For each scenario, 100 instances were generated, varying locations, obstacles, or both.
Note that random generation may result in unsolvable instances.
To focus on the algorithmic nature itself, each scenario was designed with line-shaped obstacles in $\W = [0, 1]^2$, allowing the \connect function to be implemented as a relatively lightweight geometric computation.

\Cref{fig:result-main-1} shows four basic scenarios:
\emph{(i)}~\scen{scatter-1k}, with $1,000$ locations placed by uniformly at random sampling,
\emph{(ii)}~\scen{scatter-10k}, with $10,000$ randomly placed locations,
\emph{(iii)}~\scen{grid-10k}, where locations are on a grid with a $0.01$ resolution (hence $|V|$$\approx$$10,000$),
and \emph{(iv)}~\scen{plus-2k}, with $2,000$ randomly positioned locations.
The first three scenarios include randomly placed line-shaped obstacles, with their length and quantity calibrated for moderate difficulty.
In contrast, \scen{plus-2k} features ``plus'' shaped obstacles.

\Cref{fig:result-main-2} illustrates four carefully designed scenarios to assess each method's characteristics:
\emph{(v)}~\scen{trap}, featuring a basket-shaped obstacle,
\emph{(vi)}~\scen{zigzag}, where the goal-distance heuristic is less effective,
\emph{(vii)}~\scen{gateways}, requiring navigation through five narrow ``gateways'' to reach the goal,
\emph{(viii)}~\scen{split}, with locations split into two distinct zones, separated by multiple obstacles.
Each scenario includes 1,000 locations, randomly distributed.

{
  \setlength{\tabcolsep}{0pt}
  \newcommand{\imgsize}{0.23\linewidth}
  \newcommand{\topimage}[1]{
    \begin{minipage}{\imgsize}
    \centering
    {\scen{~~~#1}}\smallskip\\
    ~~~~
    \includegraphics[trim=11 11 11 11, clip,width=0.7\linewidth]{fig/raw/ins-#1}
    \\
    \includegraphics[width=1\linewidth]{fig/raw/res_#1}
    \end{minipage}
  }
  \newcommand{\rowtitles}{
  \begin{minipage}{0.03\linewidth}
  \begin{tikzpicture}
  \node[]() at (0,1.9) {\rotatebox{90}{solved~{\small (\%)}~$\rightarrow$}};
  \node[]() at (0,-0.6) {\rotatebox{90}{$\leftarrow$~runtime~{\small (\SI{}{\second})}}};
  \node[]() at (0,-3.2) {\rotatebox{90}{$\leftarrow$~cost}};
  \node[]() at (0,-5.7) {\rotatebox{90}{$\leftarrow$~\#connect}};
  \node[color=white]() at (0, 6) {.};
  \node[color=white]() at (0, -7) {.};
  \end{tikzpicture}
  \end{minipage}
  }
  \begin{figure*}[t!]
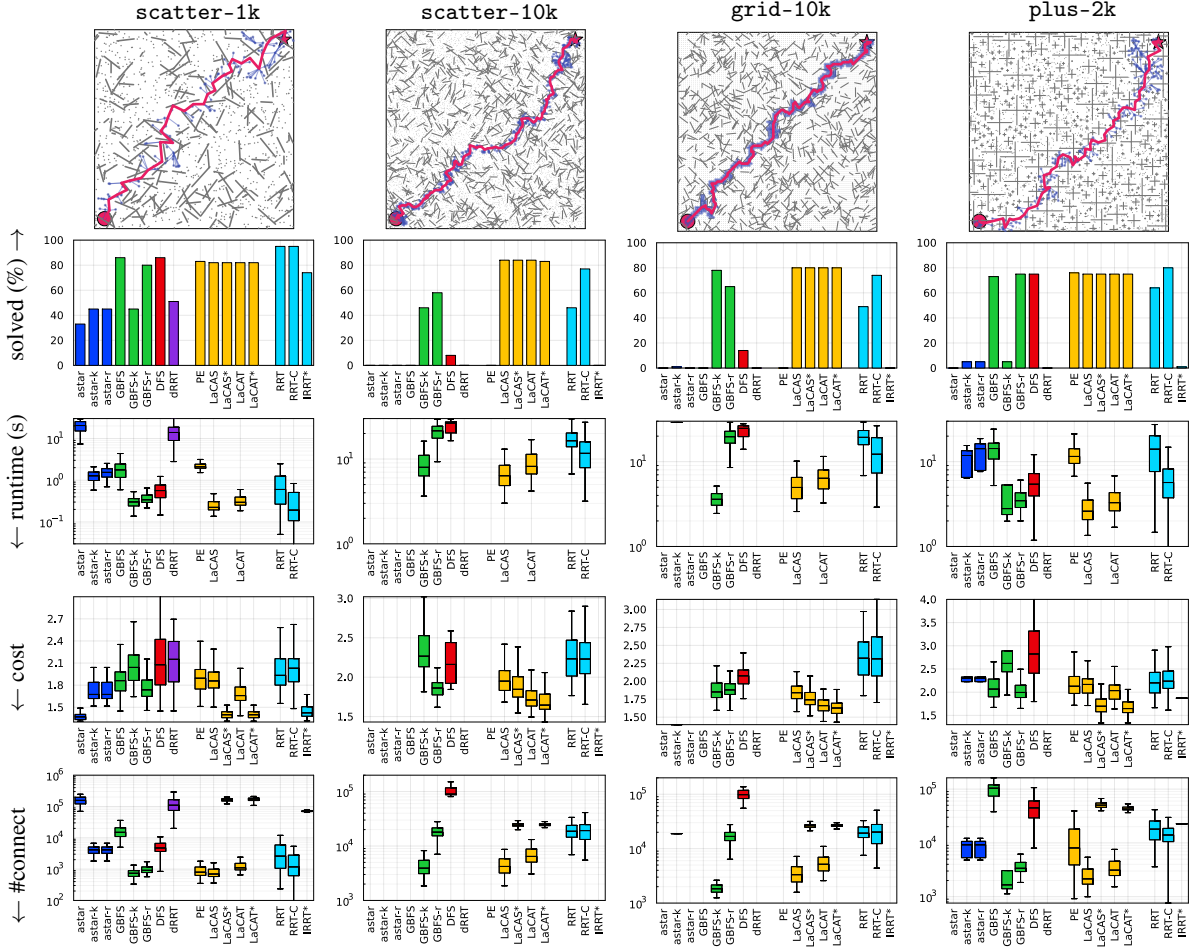

    \centering
    \small
    \begin{tabular}{ccccccc}
      \rowtitles&
      \topimage{scatter-1k} &
      \topimage{scatter-10k} &
      \topimage{grid-10k} &
      \topimage{plus-2k} &
    \end{tabular}
    \caption{
    Results of basic scenarios.
    Boxplots show the median values with the first and third quartiles.
    In addition to the runtime results relying on implementations, the figure provides the number of \connect calls (\#\connect) as a proxy for the search effort.
    Runtime scores of \lacas, \lacat, and I\rrtstar are omitted because they were executed until reaching the time limit.
    }
    \label{fig:result-main-1}
  \end{figure*}
  \begin{figure*}[t!]
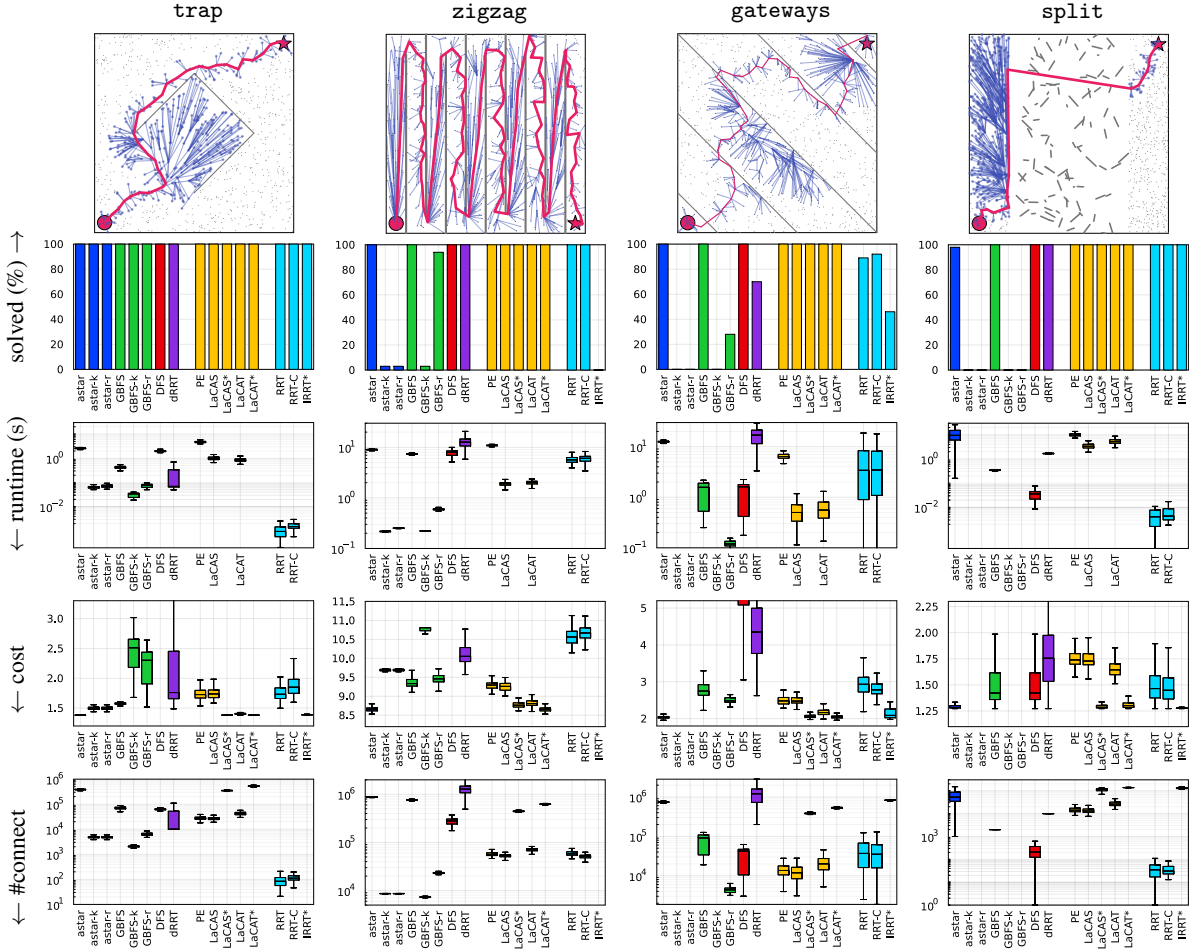

    \centering
    \small
    \begin{tabular}{ccccccc}
      \rowtitles&
      \topimage{trap} &
      \topimage{zigzag} &
      \topimage{gateways} &
      \topimage{split}
    \end{tabular}
    \caption{
    Results of specially designed scenarios.
    }
    \label{fig:result-main-2}
  \end{figure*}
}

\subsection{Results and Discussions}
\Cref{fig:result-main-1,fig:result-main-2} show that, overall, \lacas achieved superior performance in pathfinding with the huge branching factor, while maintaining theoretical guarantees of completeness and eventual optimality.
LaCAS quickly identified initial solutions with reasonable costs in most scenarios.
Moreover, \lacas continued enhancing solution quality up to the time limit.
LaCAT$^{(\ast)}$ further elevated this quality, albeit with the extra computational overhead of verifying grandparents.
Those performances were owing to the smaller number of calling \connect.
Specific findings are summarized below.

\paragraph{Huge Branching Factor} makes pathfinding challenging, as evidenced by \astar and dRRT struggling to find solutions.
While GBFS and DFS can resolve scenarios like \scen{scatter-1k} when the heuristic is effective, instances such as \scen{scatter-10k} and \scen{grid-10k} mostly remain unsolved.
This is due to numerous ``\scen{trap}'' situations where the heuristic leads the search astray, being fatal combined with the huge branching factor.
LaCAS circumvents this issue by lazy successor generation and node rolling.

\paragraph{Limiting Successors} as in GBFS-k can aid in pathfinding, yet this approach is not a fundamental solution due to parameter sensitivity.
An extreme case is \scen{split}, where limiting successors proves entirely falter.
Instead, LaCAS dynamically adjusts the number of successors through lazy successor generation, enabling it to find solutions across various scenarios.

\paragraph{Two-level Search Scheme in LaCAS} demonstrates its effectiveness compared to PE.
LaCAS enhances speed by bypassing the evaluation of every potential successor.

\paragraph{SBMP} methods like RRT differ from LaCAS in assuming predefined locations.
On-the-fly location sampling in SBMP can sometimes mitigate unnecessary search space expansion, enabling quick pathfinding; see \scen{trap} or \scen{split}.
However, their performance is highly dependent on how well the sampling strategy is matched to the environment.
A typical mismatch example is seen in \scen{gateways}, where SBMP methods struggle in sampling two suitable points to pass through narrow corridors.
LaCAS avoids this problem by using a preset mass of locations effectively.

\paragraph{Rewiring Overhead} exists, as can be seen by comparing I\rrtstar with RRT.
Analogous to this, LaCAS can achieve speedup by omitting the rewiring process, i.e., gray parts in \cref{algo:lacas}, if the objective is to just find feasible solutions.

\paragraph{Challenging Situations for LaCAS} are illustrated in \cref{fig:result-main-2}.
For instance, \scen{zigzag} demonstrates performance decline when the heuristic is ineffective, whereas \scen{split} exemplifies the drawbacks of lazy successor generation.
Nonetheless, LaCAS consistently identifies solutions with acceptable costs within the allocated time budget.

\subsection{Effect of Batch Size}
\Cref{fig:batch-size} illustrates the impact of batch size on LaCAS's performance, indicating that there are sweet spots in batch size.
For very small batch sizes, search progression can be slow, as successors closer to the goal might not appear in the initial successor generation for each node.
Conversely, large batch sizes diminish the benefits of lazy successor generation, which is pivotal for rapid search.

{
  \begin{figure}[t!]
    \centering
    \includegraphics[width=0.6\linewidth]{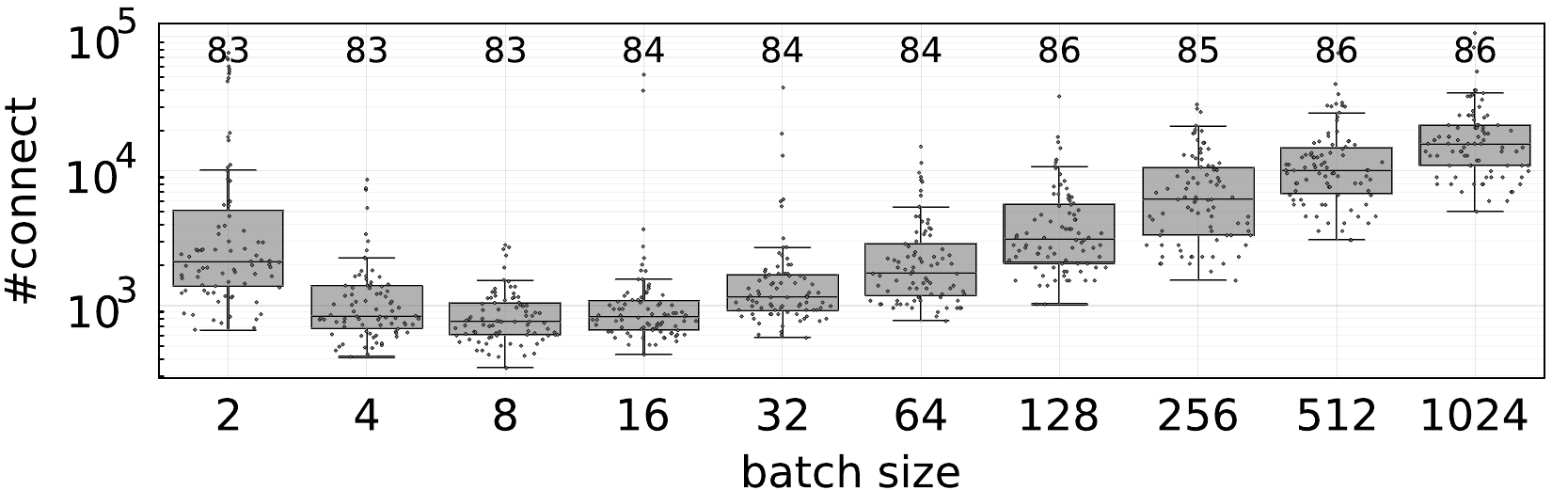}
    \caption{
      Effect of batch size on search effort of LaCAS with \scen{scatter-1k}.
      The number of solved instances within \SI{30}{\second} over 100 instances is displayed at the top of the figure.
    }
    \label{fig:batch-size}
  \end{figure}
}

\section{Conclusion}
\label{sec:discussions}
This paper studied pathfinding on a graph where every vertex is potentially connected to every other vertex, and the connectivity is implicitly defined by an oracle.
The challenge arises from the huge branching factor, rendering standard search algorithms computationally expensive.
While limiting successor generation mitigates this issue, it compromises the complete and optimal search structure, and its efficacy heavily relies on design choices.
Our study addresses this dilemma through \lacas.
At its core, \lacas employs lazy successor generation via the two-level search.
This trick enables \lacas to maintain both empirical effectiveness and theoretical assurances of completeness and eventual optimality.

The \lacas concept originates from \lacam, one of the cutting-edge multi-agent pathfinding implementations~\cite{okumura2024engineering}, which stands out as it achieves stable planning for thousands of agents.
It shares the fundamental principle of \emph{combinatorial search with generators}, a scheme that gradually generates small portions of the entire successors.
While generators need specific designs for each problem domain, the encouraging outcomes in both single- and multi-agent pathfinding suggest this approach is an effective strategy for planning problems with huge branching factors.

\section*{Acknowledgments}
This work was partly supported by JST ACT-X Grant Number JPMJAX22A1.

\bibliographystyle{named}
\bibliography{ref-macro,ref}

\begin{thebibliography}{}

\bibitem[\protect\citeauthoryear{Bentley}{1975}]{bentley1975multidimensional}
Jon~Louis Bentley.
\newblock Multidimensional binary search trees used for associative searching.
\newblock {\em Communications of the ACM}, 1975.

\bibitem[\protect\citeauthoryear{Bisiani}{1987}]{bisiani1987beam}
Roberto Bisiani.
\newblock Beam search.
\newblock {\em Encyclopedia of artificial intelligence}, 1987.

\bibitem[\protect\citeauthoryear{Choudhury \bgroup \em et al.\egroup
  }{2017}]{choudhury2017densification}
Shushman Choudhury, Oren Salzman, Sanjiban Choudhury, and Siddhartha~S
  Srinivasa.
\newblock Densification strategies for anytime motion planning over large dense
  roadmaps.
\newblock In {\em Proceedings of IEEE International Conference on Robotics and
  Automation (ICRA)}, 2017.

\bibitem[\protect\citeauthoryear{Cohen and Beck}{2019}]{cohen2019empirical}
Eldan Cohen and Christopher Beck.
\newblock Empirical analysis of beam search performance degradation in neural
  sequence models.
\newblock In {\em Proceedings of the International Conference on Machine
  Learning (ICML)}, 2019.

\bibitem[\protect\citeauthoryear{Daniel \bgroup \em et al.\egroup
  }{2010}]{daniel2010theta}
Kenny Daniel, Alex Nash, Sven Koenig, and Ariel Felner.
\newblock Theta*: Any-angle path planning on grids.
\newblock {\em Journal of Artificial Intelligence Research (JAIR)}, 2010.

\bibitem[\protect\citeauthoryear{Dellin and
  Srinivasa}{2016}]{dellin2016unifying}
Christopher Dellin and Siddhartha Srinivasa.
\newblock A unifying formalism for shortest path problems with expensive edge
  evaluations via lazy best-first search over paths with edge selectors.
\newblock In {\em Proceedings of International Conference on Automated Planning
  and Scheduling (ICAPS)}, 2016.

\bibitem[\protect\citeauthoryear{Dijkstra}{1959}]{dijkstra1959note}
Edsger~W Dijkstra.
\newblock A note on two problems in connexion with graphs.
\newblock {\em Numerische mathematik}, 1959.

\bibitem[\protect\citeauthoryear{Elbanhawi and
  Simic}{2014}]{elbanhawi2014sampling}
Mohamed Elbanhawi and Milan Simic.
\newblock Sampling-based robot motion planning: A review.
\newblock {\em IEEE Access}, 2014.

\bibitem[\protect\citeauthoryear{Furcy and Koenig}{2005}]{furcy2005limited}
David Furcy and Sven Koenig.
\newblock Limited discrepancy beam search.
\newblock In {\em Proceedings of International Joint Conference on Artificial
  Intelligence (IJCAI)}, 2005.

\bibitem[\protect\citeauthoryear{Gammell \bgroup \em et al.\egroup
  }{2014}]{gammell2014informed}
Jonathan~D Gammell, Siddhartha~S Srinivasa, and Timothy~D Barfoot.
\newblock Informed rrt: Optimal sampling-based path planning focused via direct
  sampling of an admissible ellipsoidal heuristic.
\newblock In {\em Proceedings of IEEE/RSJ International Conference on
  Intelligent Robots and Systems (IROS)}, 2014.

\bibitem[\protect\citeauthoryear{Goldenberg \bgroup \em et al.\egroup
  }{2014}]{goldenberg2014enhanced}
Meir Goldenberg, Ariel Felner, Roni Stern, Guni Sharon, Nathan Sturtevant,
  Robert Holte, and Jonathan Schaeffer.
\newblock Enhanced partial expansion {A\textasteriskcentered}.
\newblock {\em Journal of Artificial Intelligence Research (JAIR)}, 2014.

\bibitem[\protect\citeauthoryear{Hansen and Zhou}{2007}]{hansen2007anytime}
Eric~A Hansen and Rong Zhou.
\newblock Anytime heuristic search.
\newblock {\em Journal of Artificial Intelligence Research (JAIR)}, 2007.

\bibitem[\protect\citeauthoryear{Harabor \bgroup \em et al.\egroup
  }{2016}]{harabor2016optimal}
Daniel~Damir Harabor, Alban Grastien, Dindar {\"O}z, and Vural Aksakalli.
\newblock Optimal any-angle pathfinding in practice.
\newblock {\em Journal of Artificial Intelligence Research (JAIR)}, 2016.

\bibitem[\protect\citeauthoryear{Hart \bgroup \em et al.\egroup
  }{1968}]{hart1968formal}
Peter~E Hart, Nils~J Nilsson, and Bertram Raphael.
\newblock A formal basis for the heuristic determination of minimum cost paths.
\newblock {\em IEEE transactions on Systems Science and Cybernetics}, 1968.

\bibitem[\protect\citeauthoryear{Karaman and
  Frazzoli}{2011}]{karaman2011sampling}
Sertac Karaman and Emilio Frazzoli.
\newblock Sampling-based algorithms for optimal motion planning.
\newblock {\em International Journal of Robotics Research (IJRR)}, 2011.

\bibitem[\protect\citeauthoryear{Kavraki \bgroup \em et al.\egroup
  }{1996}]{kavraki1996probabilistic}
Lydia~E Kavraki, Petr Svestka, J-C Latombe, and Mark~H Overmars.
\newblock Probabilistic roadmaps for path planning in high-dimensional
  configuration spaces.
\newblock {\em IEEE Transactions on Robotics and Automation}, 1996.

\bibitem[\protect\citeauthoryear{Koenig and Likhachev}{2005}]{koenig2005fast}
Sven Koenig and Maxim Likhachev.
\newblock Fast replanning for navigation in unknown terrain.
\newblock {\em IEEE Transactions on Robotics (T-RO)}, 2005.

\bibitem[\protect\citeauthoryear{Koenig \bgroup \em et al.\egroup
  }{2004}]{koenig2004lifelong}
Sven Koenig, Maxim Likhachev, and David Furcy.
\newblock Lifelong planning a$^\ast$.
\newblock {\em Artificial Intelligence (AIJ)}, 2004.

\bibitem[\protect\citeauthoryear{Kuffner and LaValle}{2000}]{kuffner2000rrt}
James~J Kuffner and Steven~M LaValle.
\newblock Rrt-connect: An efficient approach to single-query path planning.
\newblock In {\em Proceedings of IEEE International Conference on Robotics and
  Automation (ICRA)}, 2000.

\bibitem[\protect\citeauthoryear{LaValle}{1998}]{lavalle1998rapidly}
Steven~M LaValle.
\newblock Rapidly-exploring random trees: A new tool for path planning.
\newblock Technical report, Computer Science Department, Iowa State University
  (TR 98–11), 1998.

\bibitem[\protect\citeauthoryear{LaValle}{2006}]{lavalle2006planning}
Steven~M LaValle.
\newblock {\em Planning algorithms}.
\newblock Cambridge University Press, 2006.

\bibitem[\protect\citeauthoryear{Okumura}{2023a}]{okumura2023lacam2}
Keisuke Okumura.
\newblock Improving lacam for scalable eventually optimal multi-agent
  pathfinding.
\newblock In {\em Proceedings of International Joint Conference on Artificial
  Intelligence (IJCAI)}, 2023.

\bibitem[\protect\citeauthoryear{Okumura}{2023b}]{okumura2023lacam}
Keisuke Okumura.
\newblock Lacam: Search-based algorithm for quick multi-agent pathfinding.
\newblock In {\em Proceedings of AAAI Conference on Artificial Intelligence
  (AAAI)}, 2023.

\bibitem[\protect\citeauthoryear{Okumura}{2024}]{okumura2024engineering}
Keisuke Okumura.
\newblock Engineering lacam*: Towards real-time, large-scale, and near-optimal
  multi-agent pathfinding.
\newblock In {\em Proceedings of International Joint Conference on Autonomous
  Agents \& Multiagent Systems (AAMAS)}, 2024.

\bibitem[\protect\citeauthoryear{Ravishankar}{1996}]{ravishankar1996efficient}
Mosur Ravishankar.
\newblock {\em Efficient algorithms for speech recognition}.
\newblock PhD thesis, Carnegie Mellon University, 1996.

\bibitem[\protect\citeauthoryear{Saund and Berenson}{2020}]{saund2020fast}
Brad Saund and Dmitry Berenson.
\newblock Fast planning over roadmaps via selective densification.
\newblock {\em IEEE Robotics and Automation Letters (RA-L)}, 2020.

\bibitem[\protect\citeauthoryear{Shome \bgroup \em et al.\egroup
  }{2020}]{shome2020drrt}
Rahul Shome, Kiril Solovey, Andrew Dobson, Dan Halperin, and Kostas~E Bekris.
\newblock drrt\textasteriskcentered: Scalable and informed
  asymptotically-optimal multi-robot motion planning.
\newblock {\em Autonomous Robots (AURO)}, 2020.

\bibitem[\protect\citeauthoryear{Solovey \bgroup \em et al.\egroup
  }{2016}]{solovey2016finding}
Kiril Solovey, Oren Salzman, and Dan Halperin.
\newblock Finding a needle in an exponential haystack: Discrete rrt for
  exploration of implicit roadmaps in multi-robot motion planning.
\newblock {\em International Journal of Robotics Research (IJRR)}, 2016.

\bibitem[\protect\citeauthoryear{Stern \bgroup \em et al.\egroup
  }{2019}]{stern2019def}
Roni Stern, Nathan Sturtevant, Ariel Felner, Sven Koenig, Hang Ma, Thayne
  Walker, Jiaoyang Li, Dor Atzmon, Liron Cohen, TK~Kumar, et~al.
\newblock Multi-agent pathfinding: Definitions, variants, and benchmarks.
\newblock In {\em Proceedings of Annual Symposium on Combinatorial Search
  (SoCS)}, 2019.

\bibitem[\protect\citeauthoryear{Vadlamudi \bgroup \em et al.\egroup
  }{2013}]{vadlamudi2013incremental}
Satya~Gautam Vadlamudi, Sandip Aine, and PP~Chakrabarti.
\newblock Incremental beam search.
\newblock {\em Information Processing Letters}, 2013.

\bibitem[\protect\citeauthoryear{Van Den~Berg \bgroup \em et al.\egroup
  }{2011}]{van2011anytime}
Jur Van Den~Berg, Rajat Shah, Arthur Huang, and Ken Goldberg.
\newblock Anytime nonparametric a$^\ast$.
\newblock In {\em Proceedings of AAAI Conference on Artificial Intelligence
  (AAAI)}, 2011.

\bibitem[\protect\citeauthoryear{Wilt \bgroup \em et al.\egroup
  }{2010}]{wilt2010comparison}
Christopher Wilt, Jordan Thayer, and Wheeler Ruml.
\newblock A comparison of greedy search algorithms.
\newblock In {\em Proceedings of Annual Symposium on Combinatorial Search
  (SoCS)}, 2010.

\bibitem[\protect\citeauthoryear{Yoshizumi \bgroup \em et al.\egroup
  }{2000}]{yoshizumi2000partial}
Takayuki Yoshizumi, Teruhisa Miura, and Toru Ishida.
\newblock A* with partial expansion for large branching factor problems.
\newblock In {\em Proceedings of AAAI Conference on Artificial Intelligence
  (AAAI)}, 2000.

\bibitem[\protect\citeauthoryear{Zhang}{1998}]{zhang1998complete}
Weixiong Zhang.
\newblock Complete anytime beam search.
\newblock In {\em Proceedings of AAAI Conference on Artificial Intelligence
  (AAAI)}, 1998.

\bibitem[\protect\citeauthoryear{Zhou and Hansen}{2005}]{zhou2005beam}
Rong Zhou and Eric~A Hansen.
\newblock Beam-stack search: Integrating backtracking with beam search.
\newblock In {\em Proceedings of International Conference on Automated Planning
  and Scheduling (ICAPS)}, 2005.

\bibitem[\protect\citeauthoryear{Zilberstein}{1996}]{zilberstein1996using}
Shlomo Zilberstein.
\newblock Using anytime algorithms in intelligent systems.
\newblock {\em AI magazine}, 1996.

\end{thebibliography}

\end{document}